\newtheorem{theorem}{Theorem}[section]
\theoremstyle{definition}
\newtheorem{defn}{Definition}[section]
\DeclareMathOperator*{\argmin}{argmin}
\title{\LARGE \bf
\textsc{GameChat}: Multi-LLM Dialogue for Safe, Agile, and Socially Optimal Multi-Agent Navigation in Constrained Environments 

}
\author{Vagul Mahadevan, Shangtong Zhang, and Rohan Chandra\\
{\small \texttt{\{dub5nq, xdm2bt, aar8xx\}@virginia.edu}}\\
{\small University of Virginia}\\
{\small Code, Videos, Proofs at \href{https://gamechat-uva.github.io/}{\textbf{https://gamechat-uva.github.io/}}}\\
}
\begin{document}

\maketitle
\thispagestyle{empty}
\pagestyle{empty}

\vspace{-20mm}
\begin{abstract}

Safe, agile, and socially compliant multi-robot navigation in cluttered and constrained environments remains a critical challenge. This is especially difficult with self-interested agents with unique, unknown priorities in decentralized settings, where there is no central authority to resolve conflicts induced by spatial symmetry. We address this challenge by proposing an intuitive, but very effective approach, \textsc{GameChat}, which facilitates safe, agile, and deadlock-free navigation for both cooperative and self-interested agents in cluttered environments. Key to our approach is the idea that agents should resolve conflicts on their own using natural language to communicate, much like humans.
We evaluate \textsc{GameChat} in simulated environments with doorways and intersections. The results show that 
even in the worst case, \textsc{GameChat} reduces the time for all agents to reach their goals by over 35\% from a naive baseline and by over 20\% from a state of the art baseline in the intersection scenario, while doubling the rate of ensuring the agent with a higher priority task reaches the goal first, from 50\% (equivalent to random chance) to 100\%. We also demonstrate how \textsc{GameChat} can be extended to more than two agents.

\end{abstract}

\section{INTRODUCTION}

It is challenging for multiple robots with different hidden priorities to plan safe, agile, and deadlock-free (situations where no robot can move toward its goal for a few seconds) trajectories in cluttered and constrained environments (Figure~\ref{coverimage}). First, in decentralized systems, we have no central authority that coordinates agents in a manner that deadlocks will be prevented or resolved. Second, with self-interested agents~\cite{witteveen2006multi} that have conflicting objectives, we must ensure that jerk-agent behavior (jerk agents may break previously agreed-upon consensuses or socially compliant protocols leading to unsafe behavior) is not incentivized. 
Third, symmetry between the agents (which occurs when they are the same distance from a shared collision point and have the same velocity) must be broken in a socially optimal manner. If both move forward at the same speed they will collide, and if neither moves, they will deadlock.

To handle coordination and symmetry in cluttered spaces, strategies like the right-hand rule enforce consistent, clockwise movement to resolve deadlocks~\cite{zhou2017fast}, mimicking human behavior. However, such rules assume agents follow them even when it is not in their self-interest. Game-theoretic approaches~\cite{le2022algames, bhatt2023efficient} address this by ensuring strategies form a Nash equilibrium (NE, where no agent can unilaterally change its strategy and increase its payoff). Still, these methods often require global knowledge of agents’ private costs, which is unrealistic, and they face ambiguity in symmetric scenarios due to multiple equilibria.


\begin{figure}[thpb]
      \centering
      
      \includegraphics[scale=.7]{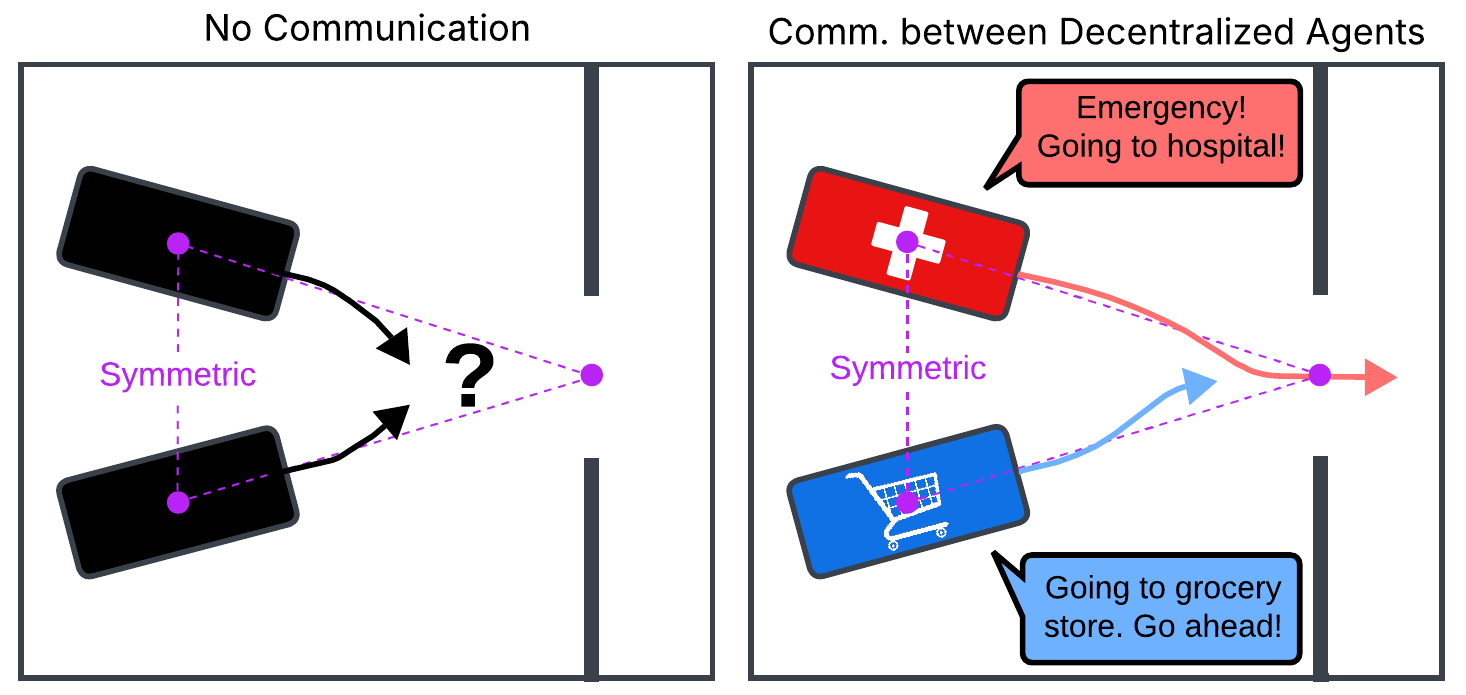}
      \caption{Two agents head toward a \textcolor{red}{\textbf{hospital}} and a \textcolor{blue}{\textbf{grocery store}} in a symmetric, constrained environment. In the left image, there is \textbf{no communication} between the agents, causing a deadlock as the agents do not know which should go first, and in the right image, \textsc{GameChat} uses natural language communication between decentralized agents to identify their roles, thereby resolving the deadlock by prioritizing urgent tasks.}
      \vspace{-15pt}
      \label{coverimage}
   \end{figure}

In symmetric scenarios, as one agent must slow down, a common approach is to define and maximize social welfare, a measure of how good a certain ordering of agents is. To do this, each agent must have some social priority, a measure of how important it is for that agent to reach the goal quickly. Previous work does this by assigning numerical priorities or budgets and using auctions to decide the order~\cite{suriyarachchi2022gameopt}, but these values are often arbitrary and require a centralized authority to manage the process. Instead, we suggest that priorities be grounded in the semantic tasks agents are pursuing. For example, we can agree it is more urgent for a first responder to reach a patient than for a tourist to get to the beach.


Since tasks are private (we do not broadcast our goals to the whole world at all times), we need a communication protocol for agents to decide who goes first. Humans use physical gestures and spoken language, but as many robots lack gestural capability, we focus on conversation. 


\subsection{Main Contributions}

We propose a new algorithm for safe, agile, deadlock-free, and decentralized multi-robot navigation in symmetric, constrained environments (like passing through doorways and narrow hallways). Our algorithm works for both cooperative and self-interested agents, that is, when agents choose to optimize their own objectives. Key to this approach is a novel LLM-based communication module in which agents automatically engage in a natural language dialogue with each other to proactively resolve any conflicts that could arise. Any conflict resolution is then executed via a low-level dynamic game-theoretic controller. Our method, which we call \textsc{GameChat}, has the following properties:


\begin{itemize}[noitemsep]
    \item \textbf{Subgame-Perfect Optimality:} \textsc{GameChat} includes a game-theoretic strategy which yields a subgame perfect equilibrium~\cite{selten1965spieltheoretische} (a guarantee that we have a Nash equilibrium now and at all future times), so self-interested agents will choose to commit to our strategy, as there is no incentive to deviate from it now or in the future.
    \item \textbf{Welfare-Maximizing (socially optimal):} \textsc{GameChat} is socially optimal, meaning that agents will maximize social welfare by prioritizing more urgent tasks. This also breaks spatial symmetry. 
    \item \textbf{Safety:} \textsc{GameChat} uses control barrier functions (CBFs)~\cite{ames2019control}, to guarantee safe trajectories.
    \item \textbf{Agility:} \textsc{GameChat} is minimally invasive, that is, the speeds of the agents are decreased as little as possible, and the agents do not spatially deviate from their desired paths, preserving smooth trajectories.
\end{itemize}


We choose natural language as our mode of communication, as it does not require a previously defined protocol (which may vary from one manufacturer to another). This also enables communication between robots and humans, a crucial step we must take as a community to achieve a future where humans and robots live and work together. 

\section{RELATED WORKS}


\noindent\textbf{Collision Avoidance:} To guarantee safety, one can use control barrier functions (CBFs)~\cite{ames2019control},~\cite{wang2017safety} which use the forward invariance of a set. If an agent is in a safe set at some time, then it will remain in that safe set for all future times. In MPC-CBF~\cite{zeng2021safety}, constraints are derived using CBFs and added to the receding-horizon controller to prevent collisions.


\noindent\textbf{Deadlock Resolution:} Symmetry between agents can result in deadlocks~\cite{alonso2013optimal}. 
The simplest way to break symmetry is to rely on the environment's randomness or to randomly perturb agents~\cite{wang2017safety}, but this can increase total cost. Heuristic-based approaches, such as a right-hand rule \cite{zhou2017fast}, \cite{chen2022recursive}  improve over random perturbations but are more centralized. 
Some methods design an auction mechanism \cite{suriyarachchi2022gameopt}, \cite{carlino2013auction}, \cite{chandra2023socialmapf}, and agents bid to cross the intersection using some bidding strategy. 
In reservation-based systems \cite{dresner2008multiagent}, agents must reserve slots to cross the intersection determined by the time to arrival.


\noindent\textbf{Learning-based Methods:} Deep reinforcement learning (DRL) has been used for multi-robot decentralized collision avoidance with local sensing \cite{long2018towards}. Inverse reinforcement learning (IRL) has been used to infer reward functions of other agents for planning in dense crowds \cite{chandra2024towards}. However, these methods have difficulty encoding safety constraints and in out-of-distribution domains. LiveNet \cite{gouru2024livenet} is a recent approach that encodes safety and agility via differentiable CBFs within a neural network controller. This allows for learnable multi-robot navigation in constrained spaces.


\noindent\textbf{Game-theoretic Methods:} For multi-agent planning with self-interested agents, methods have been developed for distributed optimization in differential games. These algorithms solve for Nash equilibria. 
Some are similar to direct methods in trajectory optimization \cite{le2022algames}, \cite{di2019newton} like Newton's method. They handle state and control input constraints and converge quickly. These algorithms yield analytical solutions that guarantee safety but not agility, and they require knowledge of the other agents' cost functions and policies/actions.


\noindent\textbf{Multi-Robot Communication:} In decentralized, cooperative multi-robot environments, communication methods are needed for coordination \cite{chandra2023deadlock}. 
Graph Neural Networks (GNNs) have been used to communicate the features of local observations among a network of robots \cite{li2020graph}. 
DMCA \cite{arul2022dmca} uses RL to learn selective communication to share goal information with relevant neighbors. Further, Arul et al. \cite{arul2024and} created an RL method that learns what information is important to communicate and when to communicate it, reducing indiscriminate broadcasting.


\noindent\textbf{LLMs for Robotics:} Recently, some methods use LLMs or Vision-Language Models (VLMs) in planning. Progprompt \cite{singh2023progprompt} uses LLMs to generate the code for robotic policies. On the multi-agent side, Garg et al. \cite{garg2024foundation} use LLMs and VLMs to resolve a deadlock after it occurs by identifying a leader agent to move first. One method \cite{song2024socially} uses VLMs to detect people and assign scores to trajectories, which induces socially compliant navigation. Most similar to our approach is RoCo \cite{mandi2024roco}, where robots are each equipped with an LLM instance and communicate in a natural language dialogue to agree on a plan of action. Note, however, that RoCo is fully cooperative while we investigate self-interested agents.

\section{PROBLEM FORMULATION}

In this section, we define a general problem formulation. We begin with a modified Partially Observable Stochastic Game (POSG) \cite{hansen2004dynamic}, defined by the tuple, $\left \langle k, T, \mathcal{X}, \{\mathcal{U}^i \}, \mathcal{T}, \Omega, \{\mathcal{O}^i \}, 
    \{\mathcal{R}^i \}
    \right\rangle$ where $k$ is the number of agents, $T$ is the finite number of time steps in the game, and $\mathcal{X}$ is the continuous state space. The superscript $i \in \{1, \ldots, k\}$ refers to the $i$th agent and the subscript $t \in \{0, \ldots, T\}$ refers to time step $t$; e.g., $\mathbf{x}^i_t \in \mathcal{X}$ is the state of agent $i$ at time $t$. Each agent $i$ has a start state $\mathbf{x}_0^i$ and a set of goal states $\mathcal{X}_g \subset \mathcal{X}$. The game ends if all agents have reached their goals or $T$ time steps have elapsed, whichever is sooner. For an agent $i$, $\mathcal{U}^i$ is the continuous control space containing feasible inputs. The dynamics function $\mathcal{T}: \mathcal{X} \times \mathcal{U}^i \rightarrow \mathcal{X}$ determines the state of an agent at time $t+1$ given its state and control input at time $t$. The set $\Omega$ is the observation space, and whenever agent $i$ arrives at a new state, the observation function $\mathcal{O}: \mathcal{X} \rightarrow \Omega$ yields a local observation of its own and nearby agents' states. We can define a trajectory of agent $i$ by $\Gamma^i = (\mathbf{x}_0^i, \ldots, \mathbf{x}_T^i)$ and an input sequence by $\Psi^i = (u_0^i, \ldots, u_{T-1}^i)$. 
For any time $t$, $C^i(\mathbf{x}_t^i) \subseteq \mathcal{X}$ is the space occupied by agent $i$ and two robots $i,j$ are in a collision if $C^i(\mathbf{x}_t^i) \cap C^j(\mathbf{x}_t^j) \neq \emptyset$. Each agent $i$ strives to maximize its payoff, $\mathcal{R}^i(\Gamma^i, \Gamma^{-i})$, which represents the total payoff at the end of the game for agent $i$ if they play $\Gamma^i$ and all other players take the trajectories in $\Gamma^{-i}$, where $\Gamma^{-i} = (\Gamma^1, \ldots, \Gamma^{i-1}, \Gamma^{i+1}, \ldots, \Gamma^{k})$.  Agents are rational:

\begin{defn}\label{def: rational}
    \textit{\textbf{Rationality:} An agent $i$ is rational if its payoff function $\mathcal{R}^i(\Gamma^i, \Gamma^{-i})$  satisfies two properties:}
    \begin{enumerate}
        \item \textit{Collision avoidance: $\mathcal{R}(\Gamma_1^i, \Gamma^{-i}) > \mathcal{R}(\Gamma_2^i, \Gamma^{-i})$ for a specific $\Gamma^{-i}$ if $\Gamma_2$ results in a collision but $\Gamma_1$ does not.}
        \item \textit{Time minimization: $\mathcal{R}(\Gamma_1^i, \Gamma^{-i}) > \mathcal{R}(\Gamma_2^i, \Gamma^{-i})$ for a specific $\Gamma^{-i}$ if $\Gamma_2$ results in higher time-to-goal than $\Gamma_1$, and neither trajectory results in a collision.}
    \end{enumerate}
\end{defn}


\begin{figure}[t]
      \centering
      
      \includegraphics[scale=.3]{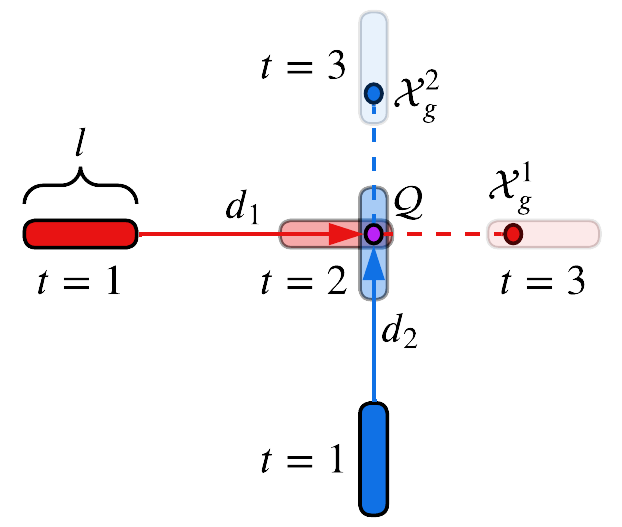}
      \caption{Example of a social mini-game. Each agent (represented by the red and blue rectangles) has length $l$ and must pass through $\mathcal{Q}$ on the way to their goals, $\mathcal{X}^1_g$ and $\mathcal{X}^2_g$. The desired trajectories $(\widetilde{\Gamma}^1, \widetilde{\Gamma}^2)$ for each agent are represented by the arrows. Note that they intersect at $\mathcal{Q}$ at $t=2$.}
      \label{smgex}
      \vspace{-10pt}
   \end{figure}

\begin{defn}\label{def: smg}
\textit{A \textbf{social mini-game} (SMG) is a type of POSG. Each agent has a desired trajectory $\widetilde{\Gamma}^i$ which is what the agent would follow if it were the only agent. The crucial property that makes an SMG is that there is some time $t$ and pair of distinct agents $i,j$ where $C^i(\mathbf{x}_t^i) \cap C^j(\mathbf{x}_t^j) \neq \emptyset$ and $\mathbf{x}_t^i \in \widetilde{\Gamma}^i, \mathbf{x}_t^j \in \widetilde{\Gamma}^j$; i.e., following the desired trajectories would cause the agents to collide at some point in time.} 
\end{defn}

We visualize an SMG in Figure~\ref{smgex}. Each agent has a straight-line desired trajectory to its goal. Both agents must pass through the collision point $\mathcal{Q}$, which models a doorway or the central point of a tight intersection. When both agents detect the existence of $\mathcal{S}$, each agent $i$ is a distance $d_i$ from $\mathcal{Q}$. We model them as thin, car-like objects of length $l$ and zero width. The agents share the velocity constraint $v_{\text{max}}$.
As a technical detail, for all times $t$ and for each agent $i$, $C^i(\mathbf{x}_t^i)$ does not contain the endpoints of the agent.

Additionally, each agent is assigned a social priority and we want to maximize social welfare ~\cite{harsanyi1955cardinal}, defined as:
\begin{defn}
    \textit{\textbf{Social welfare} is defined by the sum}
    {\small \begin{equation}
    \mathcal{W} = \sum_{i=1}^k \frac{p_i} {\tau^i\left(\Gamma^i\right)}
\end{equation}}
\textit{where $\tau^i(\Gamma^i)$ is the time-to-goal for agent $i$ if they take the trajectory $\Gamma^i$, and $p^i \in \mathbb{R}^+$ is the \textbf{social priority} of agent i, measuring how important it is for agent $i$ to reach its goal}.
\end{defn}

Note that if $p^i > p^j$ and $\tau^i(\Gamma^i) = \tau^j(\Gamma^j)$, reducing agent $i$'s time-to-goal would more positively impact social welfare than reducing agent $j$'s time-to-goal by the same amount. 

Next, agents will have to deviate from their desired trajectory to avoid collisions and deadlocks, but we also want to modify the desired trajectory as little as possible:

\begin{defn}\label{def: min inv}
    \textit{The modified trajectory ${\Gamma^{i,*}}$ is \textbf{minimally invasive} if it satisfies the following two properties:}
    \begin{enumerate}
        \item \textit{$\Delta \theta_t^i = 0$ for all $t$ (at all times, the heading of the robot does not deviate from the desired trajectory).}
        \item \textit{$ \min_t |v_t^i|$ is maximized (the robot slows down as little as is necessary to prevent a collision or deadlock).}
    \end{enumerate}
\end{defn}


\noindent Overall, given an SMG, our goal is to generate trajectories that are safe, deadlock-free, welfare-maximizing, and minimally invasive. We achieve this with \textsc{GameChat}.

\section{METHODOLOGY}
Here, we describe \textsc{GameChat} in detail. First, we discuss technical details of our environment and give an overview of the approach. Next, we present how we implemented the LLM dialogue between agents for prioritizing tasks.
Finally, we explain our game-theoretic control strategy and prove that it results in a subgame perfect equilibrium.

\subsection{Environment Details}

Our environment contains two agents running single-integrator unicycle dynamics. The state $(x, y, \theta) \in \mathcal{X}$ consists of 2D position and heading, and the control inputs $(v, \omega) \in \mathcal{U}^i$ consist of both linear and angular velocity. Each agent $i$ is trying to maximize its payoff $\mathcal{R}^i$. We implement this by defining a cost function $\mathcal{J}^i$ and using Model Predictive Control (MPC) to solve the following receding horizon optimization problem: 




{\small \begin{subequations}
\begin{align}
\left({\Gamma}^{i,*}, {\Psi}^{i,*} \right) &= \argmin_{(\Gamma^i, \Psi^i)} \sum_{t=0}^{T-1} \mathcal{J}^i(\mathbf{x}_t^i, u_t^i) + \mathcal{J}^i_T(x^i_T) \\
\text{s.t. } & \mathbf{x}_{t+1}^i = \mathcal{T}(\mathbf{x}_t^i, u_t^i), \forall t \in \{0,\ldots,T-1\} \\
&C^i(\mathbf{x}_t^i) \cap C^j(\mathbf{x}_t^j) \neq \emptyset, \forall t \label{eq: coll avoid} \\
&u_{\text{min}} \leq u_t^i \leq u_{\text{max}}, \forall t \label{eq: umax} \\
&\mathbf{x}_T^i \in \mathcal{X}^i_g.
\end{align}
\end{subequations}}

The rationality of the agent is captured through the collision avoidance constraint (\ref{eq: coll avoid}) and a penalty in the cost function for being far from the goal, incentivizing the agent to get to the goal as fast as possible.

At each time step, agent $i$ (though the function $\mathcal{O}^i$) observes the position, heading, and velocity of the other agent along with the position of the obstacles in the environment. The other agent and obstacles are treated as circular and control barrier functions are created for each one. Inequalities (analytically) generated from the CBFs are added in the next cycle of MPC as collision avoidance constraints (we refer the reader to \cite{chandra2023deadlock} for a full background on CBFs). 

      

\subsection{Technical Approach}

\begin{figure*}[t]
      \centering
      
      \includegraphics[scale=.2]{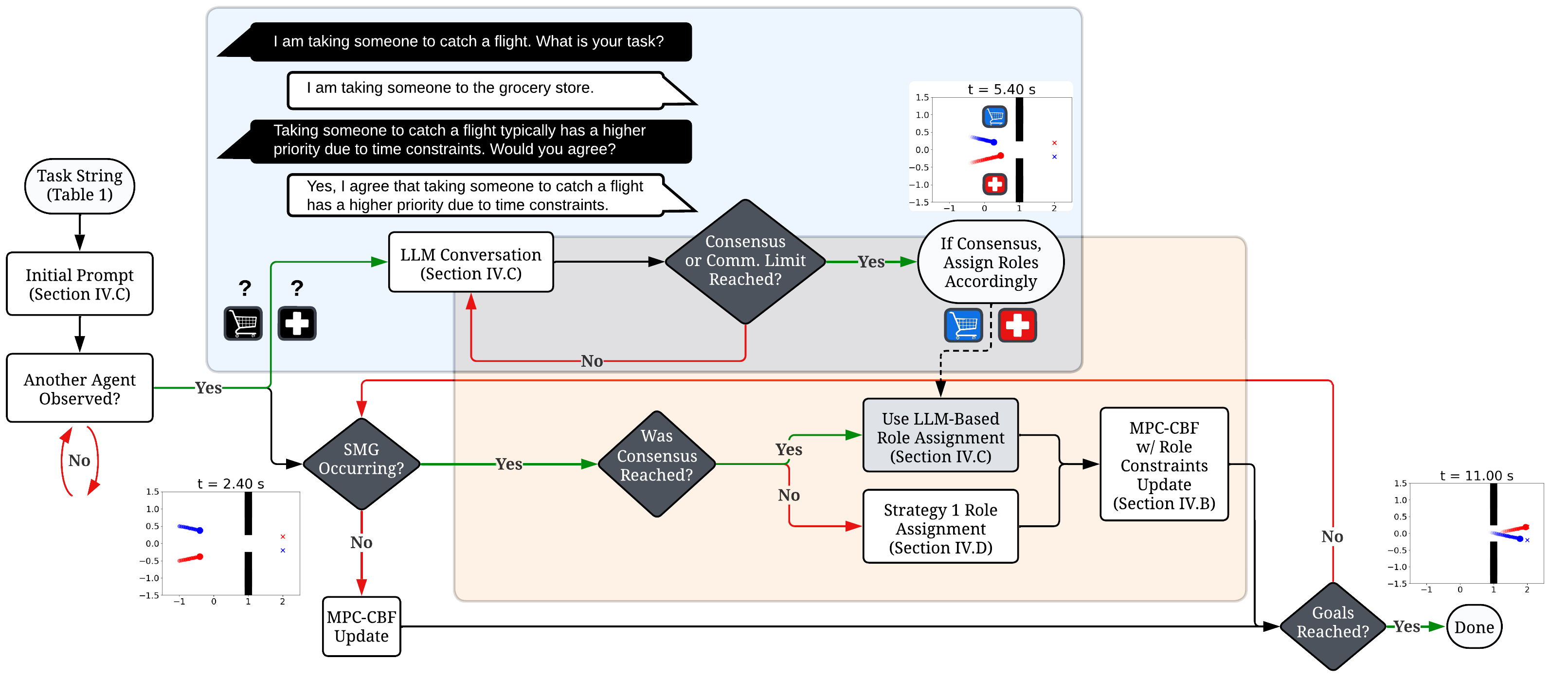}
      \caption{Flow chart describing the logical flow of \textsc{GameChat}. Blue box represents nodes involved in our novel LLM-based communication module. Orange box represents nodes handling a social mini-game (some LLM nodes are partially covered, representing that the agent may or may not be in those nodes during an SMG, as the communication could finish before an SMG begins or occur concurrently). See Figure \ref{smgex} for a visual example of an SMG.}
      \label{flowchart}
      \vspace{-15pt}
   \end{figure*}

Our technical approach is illustrated in Figure~\ref{flowchart}. At the earliest time $t$ when agents observe each other (i.e., $\mathbf{x}^j_t \in \mathcal{O}^i(\mathbf{x}^i_t)$ and $\mathbf{x}^i_t \in \mathcal{O}^j(\mathbf{x}^j_t)$), they begin exchanging messages until they reach a consensus or exhaust messages without agreement. Then, the agents assign themselves \textit{leader} and \textit{follower} roles, determining who will traverse $\mathcal{Q}$ first. If an SMG is detected before consensus or if communication fails, the agents default to Strategy 1 (defined in Section \ref{sec: GT strat}).

At each time step, agents check for SMGs—equivalent to imminent collisions—by casting rays from their positions in the direction of their headings. If these rays intersect and the agents' $\mathcal{Q}$ occupancy time intervals overlap, an SMG is present. If roles haven't been assigned, the agents default to Strategy 1 to assign them, potentially updating roles once consensus is reached. The leader proceeds at $v_{\text{max}}$, while the follower slows down to arrive just as the leader exits. This slowdown is enforced by updating the upper bound of the linear velocity constraint (\ref{eq: umax}) from $v_{\text{max}}$ to $\frac{d_i v_{\text{max}}} {l + d_j}$.

\begin{theorem}\label{thm: minimally invasive}
    \textsc{GameChat} yields minimally invasive trajectories (see Definition \ref{def: min inv}).
\end{theorem}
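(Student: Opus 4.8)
The plan is to verify directly that the trajectories produced by \textsc{GameChat} satisfy both clauses of Definition~\ref{def: min inv}. For the first clause, $\Delta\theta_t^i = 0$ for all $t$, the argument is immediate from the environment setup: each agent's desired trajectory $\widetilde{\Gamma}^i$ is a straight line toward its goal, and \textsc{GameChat} never alters the heading — it only modifies the linear velocity constraint (either leaving it at $v_{\max}$ for the leader, or tightening it to $\frac{d_i v_{\max}}{l + d_j}$ for the follower). Since the unicycle dynamics decouple the heading rate $\omega$ from the linear speed $v$, and we keep $\omega$ such that the agent tracks the straight-line path, the heading never deviates. I would state this as a short observation, being careful to note that the MPC cost $\mathcal{J}^i$ and the remaining constraints (dynamics, goal, control bounds) are all consistent with the straight-line path, so the optimizer does indeed return it.

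The substantive part is the second clause: $\min_t |v_t^i|$ is maximized subject to collision- and deadlock-freeness. Here I would split into the leader and follower cases. For the leader, $\min_t |v_t^i| = v_{\max}$, which is trivially the largest possible value given constraint~(\ref{eq: umax}), so nothing more is needed — one just has to check that the leader moving at $v_{\max}$ along its straight path does not itself cause a collision, which follows because by the role assignment the follower yields $\mathcal{Q}$ to the leader. For the follower, the claim is that the velocity bound $\frac{d_i v_{\max}}{l + d_j}$ is exactly the threshold: any larger constant speed would place the follower inside $\mathcal{Q}$ while the leader is still occupying it (violating~(\ref{eq: coll avoid})), and any smaller speed is unnecessary. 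I would make this quantitative by computing the time the leader needs to fully clear $\mathcal{Q}$ — the leader travels $d_j + l$ at speed $v_{\max}$, taking $(d_j+l)/v_{\max}$ — and the time the follower needs to reach $\mathcal{Q}$ — distance $d_i$ at speed $v$, taking $d_i/v$. Setting $d_i/v = (d_j + l)/v_{\max}$ and solving gives $v = \frac{d_i v_{\max}}{l+d_j}$, the stated bound; any $v' > v$ makes the follower arrive strictly earlier and overlap with the leader in $\mathcal{Q}$. Since the follower runs at the constant maximal admissible speed, $\min_t |v_t^i|$ equals that constant and cannot be increased.

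The final piece is to rule out deadlock and confirm that this minimally invasive pair is actually achieved rather than merely feasible: the MPC controller, with the tightened velocity upper bound, will select the largest feasible speed because the cost function penalizes distance to the goal, so at the optimum the follower saturates its (new) velocity bound at every step until $\mathcal{Q}$ is cleared, after which it may return to $v_{\max}$. Hence $\min_t |v_t^i|$ over the whole horizon equals $\frac{d_i v_{\max}}{l + d_j}$, and no feasible collision- and deadlock-free trajectory with the same heading profile can do better.

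I expect the main obstacle to be the bookkeeping around the geometry of $\mathcal{Q}$ and the "occupancy interval" — precisely defining when an agent of length $l$ is considered to be inside the collision region, handling the technical detail that $C^i(\mathbf{x}_t^i)$ excludes the agent's endpoints (which is what makes the boundary case $d_i/v = (d_j+l)/v_{\max}$ collision-free rather than colliding), and making sure the "slow down just enough" claim is tight in the sense of Definition~\ref{def: min inv} clause~2 rather than merely sufficient. Everything else reduces to the straightforward kinematics of constant-speed motion along a line.
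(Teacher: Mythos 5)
Your proposal is correct and follows essentially the same route as the paper's proof: the leader is unchanged, the follower keeps its heading, and its speed is reduced exactly to $\frac{d_i v_{\max}}{l+d_j}$, with any smaller reduction causing a collision. You simply spell out the clearing-time kinematics and the MPC saturation argument that the paper leaves implicit.
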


\begin{proof}
    The leader simply follows their desired trajectory, so there is no change in heading or velocity. The follower does not change their heading either. Their velocity is decreased by $v_{max} - \frac{d_i v_{\text{max}}} {l + d_j}$ until they reach $\mathcal{Q}$, but any less significant decrease in velocity would not prevent the collision. Therefore, the new trajectories are minimally invasive.
\end{proof}

\subsection{LLMs for Priority Determination}

Each agent has access to an LLM (\texttt{gpt-4o-mini}~\cite{hurst2024gpt}) instance through the OpenAI API and a task in natural language by way of an initial prompt:

\begin{mdframed}
\textit{You are taking someone to \textbf{TASK}. There is another agent taking someone to a location. You will have a conversation until you determine whether you have more or less priority as them depending on the tasks you and they are performing. Do not include pleasantries and be concise. Once you have reached a consensus with the other agent, output the number 1 and nothing else. Remembering your task correctly is paramount!}
\end{mdframed}

There are three types of agents. In decreasing order of priority they are: hospital agents, airport agents, and grocery agents (although we only test with three types of agents for simplicity, this could easily be generalized to an arbitrary number of agent types). Table~\ref{strtable} shows the possible task strings (which replace \textbf{TASK} in the initial prompt) corresponding to each type of agent.

\begin{table}[h]
\caption{Possible \textbf{TASK} Strings for Different Agent Types}\label{strtable}
\label{table_example}
\begin{tabular}{lcr}
\toprule
Hospital & Airport & Grocery\\
\midrule
``the hospital" & ``the airport" & ``the grocery store"\\
``the emergency room" & ``catch a flight" & ``the supermarket"\\
``the operating room" & ``board a plane" & ``the store"\\
``the ER" & ``reach the airport" & ``go grocery shopping"\\
``get surgery" & ``go to the airport" & ``buy groceries"\\
\bottomrule
\end{tabular}
\vspace{-10pt}
\end{table}

When the agents observe each other, they take turns sending messages; they receive the last message from the other agent, add it to the dialogue history, and query the LLM (since we are using the OpenAI API, there is network latency, but each message is generated within one second) for a reply to send back. The robots keep moving around the environment while communicating and will stop conversing if they reach a consensus as to which agent's task has the higher priority or each robot has sent four messages with no agreement. An example conversation is shown in Figure~\ref{flowchart}. If the agents come to a consensus, knowing which agent's task has a higher priority gives a way to break symmetry. It also allows an agent with a slightly higher $\text{TTQ}$ (time to reach $\mathcal{Q}$) reach the goal first, maximizing social welfare.

\begin{table}[h]
\caption{Nuanced tasks. Bolded are what LLMs chose as high priority}\label{strtable}
\label{nuancedtasks}
\begin{tabular}{cc}
\toprule
Task 1 & Task 2\\
\midrule
``the hospital for a routine physical" & \textbf{``flight departing very soon"} \\
\textbf{``the hospital with sirens blaring"} & ``flight departing very soon" \\
``fire truck going to station" & \textbf{``ambulance to the ER"} \\
\textbf{``fire truck to apartment on fire"} & ``ambulance to the ER" \\
\bottomrule
\end{tabular}
\vspace{-10pt}
\end{table}

While the concept of hospital, airport, and grocery agents can be generalized to arbitrary task types, we wanted to go beyond testing with just three task types that have very clear-cut priorities. To push the limits of the reasoning ability of the LLMs, Table \ref{nuancedtasks} shows some more complex scenarios to compare, showing that the LLMs reason quite accurately about prioritizing tasks, including understanding that a fire emergency that threatens many lives takes precedence over an ambulance that takes one person to the ER.


\begin{theorem}\label{thm: welfare}
    In a symmetric social mini-game ($TTQ_i = TTQ_j$ and $\tau^i(\widetilde{\Gamma}^i) = \tau^j(\widetilde{\Gamma}^j)$) where, without loss of generality, $p^i > p^j$, the modified trajectories $\Gamma^{i,*}$ and $\Gamma^{j,*}$ generated by \textsc{GameChat} maximize social welfare.
\end{theorem}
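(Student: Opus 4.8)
The plan is to reduce the claim to a comparison between exactly two candidate trajectory profiles — "agent $i$ goes first" versus "agent $j$ goes first" — and show that in a symmetric SMG the first of these strictly dominates in social welfare whenever $p^i > p^j$, and that \textsc{GameChat} is precisely the mechanism that selects it. First I would argue that in any SMG at most one agent can pass through $\mathcal{Q}$ without slowing (this follows from Definition \ref{def: smg} together with the overlap-of-occupancy-intervals characterization of an SMG given in the Technical Approach section): if both agents travel at $v_{\max}$ along their desired trajectories they collide at $\mathcal{Q}$. Hence any collision-free, deadlock-free profile designates a leader and a follower. By Theorem \ref{thm: minimally invasive}, \textsc{GameChat}'s follower slows exactly enough to clear $\mathcal{Q}$ as the leader exits and then resumes $v_{\max}$; so the leader's time-to-goal is the unperturbed $\tau(\widetilde{\Gamma})$, and the follower's time-to-goal is $\tau(\widetilde{\Gamma})$ plus a fixed, strictly positive delay $\delta$ that depends only on the symmetric geometry ($\delta$ is the same regardless of which agent is designated follower, by symmetry $TTQ_i = TTQ_j$ and $\tau^i(\widetilde\Gamma^i) = \tau^j(\widetilde\Gamma^j)$).

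Next I would write out the two welfare values explicitly. Let $\tau := \tau^i(\widetilde\Gamma^i) = \tau^j(\widetilde\Gamma^j)$. If $i$ leads, $\mathcal{W}_{i\text{-first}} = \frac{p^i}{\tau} + \frac{p^j}{\tau + \delta}$; if $j$ leads, $\mathcal{W}_{j\text{-first}} = \frac{p^i}{\tau + \delta} + \frac{p^j}{\tau}$. Subtracting,
\[
\mathcal{W}_{i\text{-first}} - \mathcal{W}_{j\text{-first}} = (p^i - p^j)\left(\frac{1}{\tau} - \frac{1}{\tau + \delta}\right) = (p^i - p^j)\cdot\frac{\delta}{\tau(\tau+\delta)} > 0,
\]
since $p^i > p^j$, $\delta > 0$, and $\tau > 0$. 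So making the higher-priority agent the leader is the unique welfare-maximizing choice among the two feasible orderings, and since these are the only feasible orderings, it is the global maximum. Finally I would invoke the LLM priority-determination module: in a symmetric SMG the two agents have distinct types, the dialogue reaches consensus on which task has higher priority (here agent $i$), and \textsc{GameChat} assigns that agent the leader role — so the profile it outputs is exactly $\mathcal{W}_{i\text{-first}}$, which we just showed is welfare-maximal. This closes the argument.

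The main obstacle I anticipate is making the "fixed positive delay $\delta$, independent of which agent follows" claim airtight. It rests on the symmetry hypotheses ($TTQ_i = TTQ_j$, equal desired travel times) plus the explicit follower velocity bound $\frac{d_i v_{\max}}{l + d_j}$ from the methodology; I would need to verify that, under symmetry, $d_i = d_j$ (or at least that the induced extra travel time is symmetric in $i,j$) so that the follower's delay, and hence the second term in each welfare expression, is genuinely the same constant in both orderings. A secondary gap is that the theorem as stated treats the LLM as an oracle that correctly identifies the higher-priority type; I would state this as an explicit assumption (consensus is reached and is correct), consistent with how the surrounding text frames the three unambiguous priority tiers, rather than attempting to prove LLM correctness. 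If one instead wanted to allow $d_i \neq d_j$ while keeping $\tau^i(\widetilde\Gamma^i) = \tau^j(\widetilde\Gamma^j)$, the delay terms would differ between orderings and the subtraction above would need an extra monotonicity argument; I would avoid that by leaning on full geometric symmetry as the intended reading of "symmetric social mini-game."
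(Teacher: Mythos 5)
Your proposal is correct and follows essentially the same route as the paper: compare the two possible leader/follower assignments, note that symmetry makes the pair of times $(\tau_1,\tau_2)$ with $\tau_1<\tau_2$ independent of which agent leads, and show $\mathcal{W}_{i\text{-first}}-\mathcal{W}_{j\text{-first}}=(p^i-p^j)\bigl(\tfrac{1}{\tau_1}-\tfrac{1}{\tau_2}\bigr)>0$. Your added care about the delay $\delta$ being assignment-independent and about treating the LLM consensus as a correctness assumption only makes explicit what the paper's proof leaves implicit.
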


\begin{proof}
    Whichever agent is designated the leader will have a time-to-goal of $\tau_1$ and the follower will have a time-to-goal of $\tau_2$, where $\tau_1 < \tau_2$. \textsc{GameChat} assigns the agent with the higher priority to be the leader, so $\tau^i({\Gamma}^{i,*}) = \tau_1$ and $\tau^j({\Gamma}^{j,*}) = \tau_2$ so the resulting social welfare will be $\mathcal{W}_{G} = \frac{p_i}{\tau_1} + \frac{p_j}{\tau_2}$. The alternative assignment would yield $\mathcal{W}_{A} = \frac{p_i}{\tau_2} + \frac{p_j}{\tau_1}$. Subtracting $\mathcal{W}_{G} - \mathcal{W}_A$ yields $\frac{p_i - p_j}{\tau_1} - \frac{p_i - p_j}{\tau_2}$, which is positive since $p_i - p_j > 0$ and $\tau_1 < \tau_2$. So, \textsc{GameChat} leads to the role assignment which maximizes social welfare.
\end{proof}

\subsection{Game Theoretic Strategy (Strategy 1)}\label{sec: GT strat}

If communication is not possible, the agents are not able to come to an agreement, or the agents end up in an SMG while they are still in the process of communicating (consensus has not yet been reached), we need a control strategy for the agents to fall back on. We describe our proposed strategy (hereafter called Strategy 1). If there is no collision on the rest of the desired trajectory $\widetilde{\Gamma}^i$, the agent sets the linear velocity to $v_\text{max}$. Let $\text{TTQ}_i$ denote the time agent $i$ would take to reach $\mathcal{Q}$ (for now, assume there is asymmetry so $\text{TTQ}_i \neq \text{TTQ}_j$). Agent $i$ checks if $\text{TTQ}_i < \text{TTQ}_j$ (in our environment, this amounts to checking if $\frac{d_i}{v_{\text{max}}} < \frac{d_j}{v_{\text{max}}}$). If so, then agent $i$ chooses $v_{\text{max}}$ as its linear velocity for the whole game, taking the leader role. If $\text{TTQ}_i > \text{TTQ}_j$, agent $i$ selects a velocity so it reaches $\mathcal{Q}$ at the instant that agent $j$ completely clears $\mathcal{Q}$. For our environment, this means agent $i$ will choose a linear velocity of $\frac{d_i v_{\text{max}}}{l + d_j}$ until reaching $\mathcal{Q}$, from which point it chooses $v_{\text{max}}$, taking the follower role.

\begin{theorem}\label{thm: spe}
If both agents follow Strategy 1 at all times, then we have a subgame perfect equilibrium \cite{selten1965spieltheoretische}.
\end{theorem}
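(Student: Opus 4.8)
The plan is to verify the two conditions in the definition of subgame perfect equilibrium directly: that Strategy 1 is a Nash equilibrium in the original game, and that the restriction of Strategy 1 to any subgame remains a Nash equilibrium there. Because the game tree here is essentially one-dimensional (each agent only controls its scalar linear velocity toward $\mathcal{Q}$, with heading fixed along the desired straight-line trajectory), a subgame is specified by the agents' positions $d_i, d_j$ at some node, and the continuation of Strategy 1 from that node is just Strategy 1 re-evaluated with those positions. So it suffices to show: for \emph{every} configuration of remaining distances to $\mathcal{Q}$, neither agent can unilaterally improve its payoff by deviating from Strategy 1, assuming the other agent plays Strategy 1. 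This collapses the ``every subgame'' requirement into a single case analysis parametrized by which agent is closer to $\mathcal{Q}$.

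I would carry out the argument by invoking the rationality assumption (Definition~\ref{def: rational}): each agent's payoff strictly prefers collision-free trajectories over colliding ones, and among collision-free trajectories strictly prefers lower time-to-goal. Fix a subgame. Without loss of generality let $\text{TTQ}_i \le \text{TTQ}_j$. Under Strategy 1, agent $i$ (the leader) drives at $v_{\max}$ straight through $\mathcal{Q}$ and agent $j$ (the follower) picks the largest velocity $\frac{d_i v_{\max}}{l + d_j}$ that still lets agent $i$ clear $\mathcal{Q}$ first. I would argue the leader has no profitable deviation: it is already achieving its minimum possible time-to-goal (it travels its desired path at maximum speed), and Definition~\ref{def: rational} says nothing can beat that among safe trajectories — any faster is infeasible by the velocity bound (\ref{eq: umax}), and any slower or any spatial detour strictly increases time-to-goal. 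Then I would argue the follower has no profitable deviation: speeding up beyond $\frac{d_i v_{\max}}{l+d_j}$ causes it to enter $\mathcal{Q}$ before the leader clears it, producing a collision (by the geometry of $\mathcal{Q}$ and the length-$l$ bodies), which is strictly worse by the collision-avoidance clause; slowing down further, or deviating spatially, only increases time-to-goal with no benefit, which is strictly worse by the time-minimization clause. Hence the Strategy 1 profile is a Nash equilibrium in this subgame. The symmetric-tie case $\text{TTQ}_i = \text{TTQ}_j$ I would handle by noting Strategy 1 (together with the tie-break, whether from communication or the fallback rule) designates one leader and one follower, and then the same two bullet-point arguments apply verbatim.

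Since the subgame was arbitrary and the original game is itself a subgame (the root), this establishes that Strategy 1 is a Nash equilibrium in every subgame, which is exactly the definition of a subgame perfect equilibrium; I would close by citing Definition of SPE stated just above the theorem. It is worth remarking explicitly that the ``subgame perfect'' strengthening over plain Nash is what rules out non-credible threats — e.g., an agent announcing it will barrel through $\mathcal{Q}$ regardless — because such a plan fails to be optimal in the subgame that would actually be reached, and our case analysis covers that subgame too.

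The main obstacle I anticipate is the follower's deviation analysis, specifically making the collision claim airtight: I need the precise statement that if the follower chooses any velocity exceeding $\frac{d_i v_{\max}}{l+d_j}$ (possibly time-varying) then the time-intervals during which the two length-$l$ bodies occupy $\mathcal{Q}$ must overlap, hence $C^i(\mathbf{x}_t^i)\cap C^j(\mathbf{x}_t^j)\neq\emptyset$ for some $t$. This requires carefully using the environment's technical details — single-integrator unicycle dynamics with fixed heading, the length-$l$ zero-width bodies, the open-endpoint convention on $C^i$, and the definition of $\text{TTQ}$ and clearance time — to pin down exactly when ``entering $\mathcal{Q}$ before the leader clears it'' forces a spatial intersection. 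A secondary subtlety is confirming that allowing the follower spatial detours (nonzero $\Delta\theta$, which Strategy 1 forbids but a deviating agent could attempt) cannot simultaneously avoid collision \emph{and} beat the Strategy 1 time-to-goal; the quickest route being the straight line, any detour long enough to dodge the leader is at least as slow as waiting, so rationality rejects it. I expect these to be dispatched with the geometry already set up around Figure~\ref{smgex}, but they are where the real content of the proof lives.
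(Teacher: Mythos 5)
Your proposal follows essentially the same route as the paper's proof: fix a subgame, without loss of generality let agent $i$ be the one closer to $\mathcal{Q}$, argue that the leader cannot improve because it already traverses its straight-line desired trajectory at $v_{\text{max}}$, argue that the follower cannot improve because any speed above $\frac{d_i v_{\text{max}}}{l+d_j}$ forces a collision (ruled out by the collision-avoidance clause of Definition~\ref{def: rational}) while any further slowdown or spatial detour only increases time-to-goal (ruled out by the time-minimization clause, since it must still pass through $\mathcal{Q}$ after the leader clears it), and then note that every subgame has the same structure so the same case analysis gives a Nash equilibrium in each one. The geometric points you flag as the real content (occupancy-interval overlap, the open-endpoint convention, detours being no faster than waiting) are left at exactly this informal level in the paper as well, so your treatment is no weaker than the published argument there.

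The one place you genuinely diverge, and where your argument as written has a gap, is the perfectly symmetric case $\text{TTQ}_i = \text{TTQ}_j$. You dispose of it by appealing to ``the tie-break, whether from communication or the fallback rule,'' but Strategy 1 contains no tie-break: its statement explicitly assumes $\text{TTQ}_i \neq \text{TTQ}_j$, and the communication-based role assignment is precisely what is unavailable in the regime Theorem~\ref{thm: spe} addresses (Strategy 1 is the fallback when consensus has not been reached). In an exact tie neither branch of Strategy 1 applies, so your ``same two bullet points apply verbatim'' step has nothing to attach to. The paper instead removes this case by assumption: it argues that numerical error and real-world stochasticity perturb the agents' states so that, with probability 1, exact symmetry never holds in any subgame, and then applies the asymmetric analysis throughout. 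To close your gap, either adopt that probabilistic assumption explicitly or augment Strategy 1 with a concrete deterministic tie-breaking rule and verify the equilibrium property for it; as stated, the tie case is not covered.
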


\begin{proof}
    Without loss of generality, assume that at the start of the game, $\text{TTQ}_i < \text{TTQ}_j$. If both agents play our strategy, there will be no collision, so agents must reduce their time-to-goal to increase their payoff (as agents are rational, see Definition \ref{def: rational}). But agent $i$ is already moving toward its goal in a straight line at $v_{\text{max}}$ so there is nothing it can do to get there faster. Agent $j$ is not moving at maximum speed, but if it went any faster without changing direction, it would collide with agent $i$, something it is not willing to do. Changing direction also would not let it reach the goal any faster since it must pass through $\mathcal{Q}$ first, which cannot happen until agent $i$ has completely cleared it. Since no agent can reduce their payoff unilaterally, we have a Nash equilibrium.

    We assume that subgames with perfect symmetry ($\text{TTQ}_i = \text{TTQ}_j$) do not occur as in practice, due to numerical error in computation and stochasticity of the real-world, agents' states and inputs will be slightly perturbed and with probability 1, asymmetry is maintained. Since every subgame has the same structure as the original game and maintains asymmetry, it is also an NE for both agents to play our strategy in each subgame and so we have an SPE. 
\end{proof}

\begin{table*}[t]
\centering
\caption{Performance of the various control methods. \textsc{GameChat} (no LLM) outperforms the baselines in Min $v$ and all methods w.r.t. Makespan. Additionally, \textsc{GameChat} SMG Comm. only had a slightly higher makespan than the Pre-Comm. variant. }
\label{resulttable}
\resizebox{\linewidth}{!}{
\begin{tabular}{rccccccc}
\toprule
\multicolumn{8}{c}{\textbf{Doorway Scenario}} \\
\midrule
Method & $(\downarrow)$ \# Coll. & $(\downarrow)$ \# DLs & $(\uparrow)$ \% CP & $(\downarrow)$ Hi Pri. TTG (s) & $(\downarrow)$ Makespan (s) & 
$(\uparrow)$ Min $v$ (m/s) & $(\downarrow)$ $\Delta$ Path (m) \\
\midrule
MPC-CBF \cite{zeng2021safety} & 0 & 18 & N/A  & N/A & N/A & N/A & N/A \\
SMG-CBF \cite{chandra2023deadlock} & 0 & 0 & 50 & 11.300 $\pm$ 1.289 & 12.400 $\pm$ 0.873 & 0.118 $\pm$ 0.003 & 0.009 $\pm$ 0.002 \\
\textbf{\textsc{GameChat}} (no LLM) &\textbf{0} & \textbf{0} & 50 & 10.733 $\pm$ 0.566 & \textbf{11.267 $\pm$ 0.194} & 0.200 $\pm$ 0.088 & 0.010 $\pm$ 0.001 \\
\textsc{GameChat} Ground Truth & 0 & 0 & 100 & 10.400 $\pm$ 0.291 & 11.533 $\pm$ 0.194 & 0.227 $\pm$ 0.025 & 0.009 $\pm$ 0.001 \\
\textbf{\textsc{GameChat}} Pre-SMG Comm. & \textbf{0} &\textbf{0} & \textbf{100 }& \textbf{10.400 $\pm$ 0.291} & 11.533 $\pm$ 0.194 & 0.227 $\pm$ 0.025 & 0.009 $\pm$ 0.001 \\
\textbf{\textsc{GameChat}} SMG Comm. & \textbf{0} & \textbf{0} & \textbf{100} & 10.467 $\pm$ 0.388 & 11.600 $\pm$ 0.291 &\textbf{0.228 $\pm$ 0.030}& \textbf{0.001 $\pm$ 0.001}  \\
\midrule

\multicolumn{8}{c}{\textbf{Intersection Scenario}} \\
\midrule
Method & $(\downarrow)$ \# Coll. & $(\downarrow)$ \# DLs & $(\uparrow)$ \% CP & $(\downarrow)$ Hi Pri. TTG (s) & $(\downarrow)$ Makespan (s) & 
$(\uparrow)$ Min $v$ (m/s) & $(\downarrow)$ $\Delta$ Path (m)  \\
\midrule
MPC-CBF \cite{zeng2021safety} & 0 & 6 & 50  & 11.300 $\pm$ 1.149 & 12.400 $\pm$ 0.000  & 0.061 $\pm$ 0.000 & 0.007 $\pm$ 0.000 \\
SMG-CBF \cite{chandra2023deadlock} & 0 & 0 & 50 & 12.600 $\pm$ 2.478 & 15.000 $\pm$ 0.291 & 0.102 $\pm$ 0.000 & 0.001 $\pm$ 0.000 \\
\textbf{\textsc{GameChat}} (no LLM) & 0 & 0 & 50 & 10.800 $\pm$ 0.823 & \textbf{11.600 $\pm$ 0.000} & 0.249 $\pm$ 0.013 & 0.001 $\pm$ 0.000 \\
\textsc{GameChat} Ground Truth & 0 & 0 & 100 & 10.200 $\pm$ 0.291 & 11.800 $\pm$ 0.291 & 0.233 $\pm$ 0.020 & 0.001 $\pm$ 0.000 \\
\textbf{\textsc{GameChat}} Pre-SMG Comm. & \textbf{0} & \textbf{0} & \textbf{100} & \textbf{10.200 $\pm$ 0.291} & 11.800 $\pm$ 0.291 & 0.233 $\pm$ 0.020 & \textbf{0.001 $\pm$ 0.000} \\
\textbf{\textsc{GameChat}} SMG Comm. & \textbf{0} & \textbf{0} & \textbf{100} & 10.267 $\pm$ 0.388 & 11.867 $\pm$ 0.388 & \textbf{0.234 $\pm$ 0.024} & \textbf{0.001 $\pm$ 0.000}  \\
\bottomrule
\end{tabular}
}
\vspace{-10pt}
\end{table*}

\textit{Remark}: Note that even without our assumption maintaining asymmetry in subgames, Strategy 1 results in a Nash equilibrium. However, if we demand an SPE, undesirable Nash equilibria involving non-credible threats are culled, and the equilibrium induced by Strategy 1 remains. This gives us guarantees on future behavior as neither agent will have any incentive to deviate from this strategy in future subgames.

However, being an SPE distinguishes this equilibrium from other undesirable Nash equilibria by preventing non-credible threats resulting from future irrational behavior. For instance, another Nash equilibrium would be for the agent farther from $\mathcal{Q}$ to say, ``I will move toward the goal as fast as possible, so you must slow down if you do not want to collide with me," and take on the leader role while the closer agent takes on the follower role. Indeed, given that the farther agent plays the leader role, the closer agent would not move faster as then the farther agent would collide into it. However, trusting in the rationality of the farther agent, if the closer agent were to reach $\mathcal{Q}$ first we know that the farther agent would be forced to slow to avoid collision and so its threat is not credible--it is worse off if it follows through on the threat.

\textit{Remark}: Work in experimental economics \cite{rabin2006experimental} shows that people are willing to make small sacrifices for those in their vicinity even if it technically is against their interests to do so. Thus, we believe that our communication model between agents is reasonable since it allows for courteous behavior mimicking that exhibited by humans. The primary purpose of Strategy 1 is to serve as a competent backup plan for when communication does not work out for whatever reason.

\subsection{\textsc{GameChat} for $k > 2$ agents}

The most direct way to scale \textsc{GameChat} beyond two agents would be to have all pairs converse, but this requires $k-1$ simultaneous LLM dialogues, which is computationally impractical if using local LLMs and cause intransitivities in the ordering. Instead, we have each agent broadcast its task in natural language. Once all tasks are known, each agent broadcasts a ranking of tasks by priority, determined via an LLM query. This reduces the number of LLM calls to grow linearly with agents rather than quadratically.


We now have a classic problem of social choice theory: creating an aggregate ranking from individual preferences. We apply a simple pairwise comparison, a Condorcet method, favoring agents ranked above others most often. Although theoretically prone to Condorcet cycles, we observed none in practice due to consistent LLM outputs. Still, robust alternatives like Condorcet methods that handle cycles or Kemeny ranking or Borda count could be used. \cite{fishburn2015theory}.

\section{EXPERIMENTS}

There are three key questions we address. First, without communication, does Strategy 1 outperform baseline methods? 
Second, how fast are the LLMs and, when we incorporate communication, how often do the conclusions of the LLMs match up with the true priorities? Finally, how do the different ways that communication can unfold compare to each other and the non-communicative methods?

To address the first question, we compared \textsc{GameChat} (no LLM) with MPC-CBF \cite{zeng2021safety}, which uses CBFs to ensure safety, and SMG-CBF \cite{chandra2023deadlock}, which applies CBFs for both safety and liveness. For the second question, we evaluated against \textsc{GameChat} Ground Truth, where agents know each other’s true priority and instantly adopt the correct roles. This serves as the gold standard we aim to achieve. The third question considers timing differences between when LLM communication starts (on mutual observation) and when SMGs are triggered (on imminent collision). We tested both extremes: the best case (\textsc{GameChat} Pre-SMG Comm.; agents reach consensus before entering an SMG) and the worst case (\textsc{GameChat} SMG Comm.; agents enter an SMG upon observing each other and communicate while executing Strategy 1 until reaching consensus).

We tracked several metrics. First, we counted the number of collisions and deadlocks. We next examined metrics involving priority: the percent of scenarios where the higher-priority agent got to $\mathcal{Q}$ first and the average time to goal for the higher priority agent. Finally, we tracked the makespan, defined as the total duration of the scenario (also the slower agent's time to goal) and metrics involving invasiveness: the slower agent's minimum velocity prior to reaching $\mathcal{Q}$ (higher is better since it indicates less significant slowdown) and the average deviation from the desired straight-line paths.

\subsection{Simulation Details}

We implemented the simulations in Python using \texttt{do\_mpc}~\cite{fiedler2023mpc} (which uses CasADi~\cite{andersson2019casadi}) and Ipopt~\cite{wachter2006implementation}). For LLMs, we used the \texttt{openai} package to access \texttt{gpt-4o-mini}~\cite{hurst2024gpt} (since different models reasoned accurately about the task, we chose a smaller one that would run faster). Both robots shared a maximum velocity of $v_{\text{max}} = 0.3m/s$ and a maximum angular velocity of $\omega_{\text{max}} = 1.0rad/s$. Agents had radii of $0.1m$, the simulation step time was $0.2s$, and the maximum runtime was $15s$. There were two scenario types. In the doorway scenario, both agents must pass through a doorway of size $0.4m$. Both agents start $2m$ to the left of the doorway and $0.5m$ north or south of it, with goals on the opposite side. In the intersection scenario, the intersection the agents must pass through is $0.4m$ by $0.4m$. Both start $2m$ away from the center of the intersection and their goals are $1m$ away on the other side. We created 18 versions of each scenario. The scenarios as described previously are symmetric so we had variants where we moved exactly one agent $0.25m$ away from the goal. We also had every combination of priorities such that neither was the same (e.g. hospital and airport, grocery and airport, etc.) Each agent was randomly given one of the task strings from their type's set of possible strings.

\subsection{Results}
The metrics are displayed in Table~\ref{resulttable}. Note that MPC-CBF, despite being safe from collisions, frequently failed to complete the scenario since it lacks deadlock resolution capabilities. All other methods, however, were always able to avoid collisions and deadlocks, and they also had little path deviation, indicating smoothness. Also, we ran a hardcoded experiment (from here on referred to as the hardcoded baseline) where the first agent needed to reach $\mathcal{Q}$ before the other agent was permitted to begin moving. It obtains a makespan of $18.4s$ in both doorway and intersection environments.

\subsubsection{Noncommunicative Methods}

\begin{figure}[t]
      \centering
      
      \includegraphics[scale=.085]{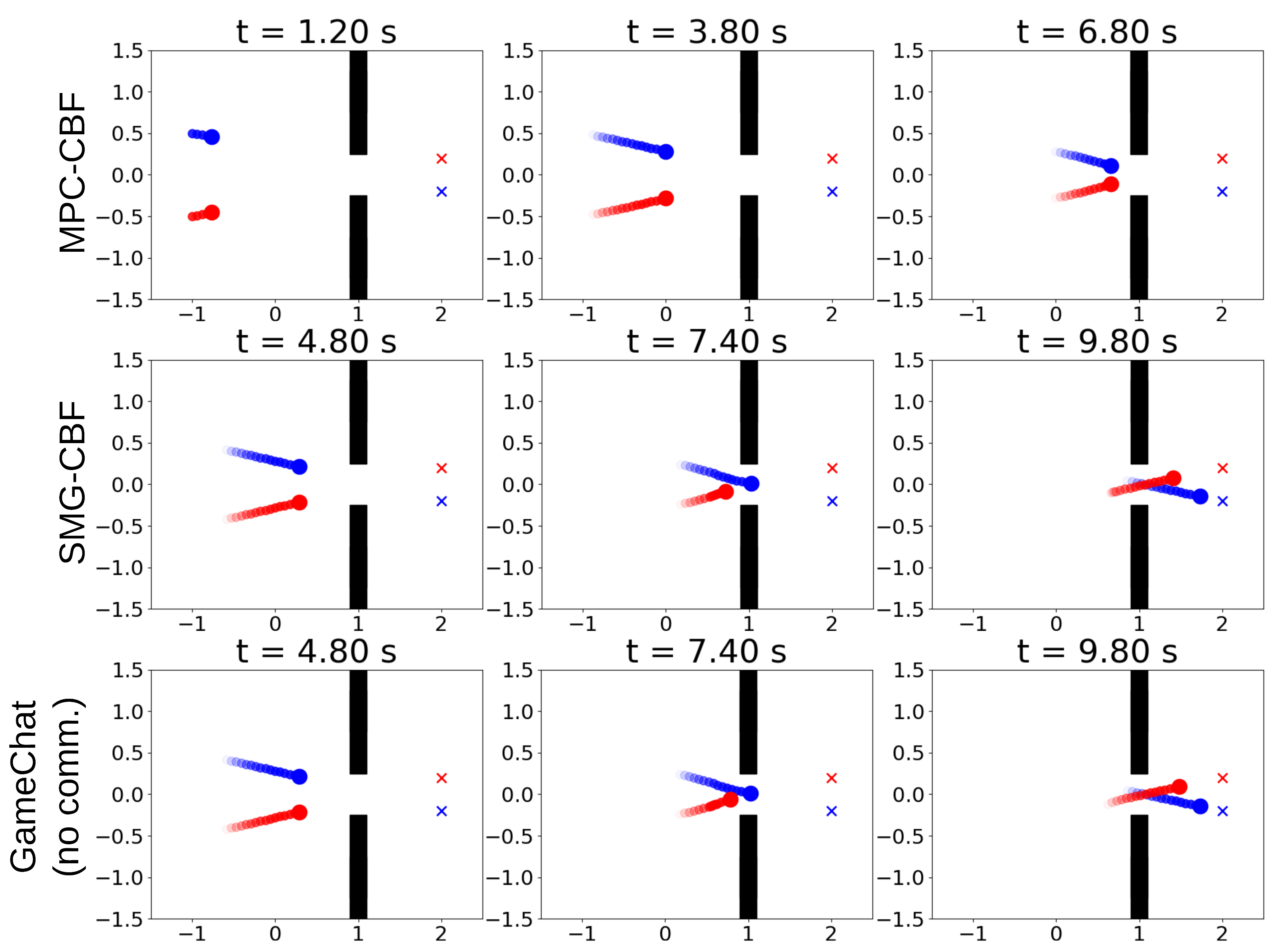}
      \caption{Trajectories generated by noncommunicative methods in the symmetric doorway. \textcolor{blue}{\textbf{Blue}} is a \textcolor{blue}{\textbf{grocery}} agent and \textcolor{red}{\textbf{red}} is a \textcolor{red}{\textbf{hospital}} agent. Top row is MPC-CBF, middle is SMG-CBF, bottom is \textsc{GameChat} (no LLM). MPC-CBF deadlocks (due to symmetry) and the other methods do not prioritize the hospital agent's urgent task (this happens 50\% of the time).}
      \label{doornocomm}
      \vspace{-10pt}
   \end{figure}

\begin{figure}[thpb]
      \centering
      
      \includegraphics[scale=.085]{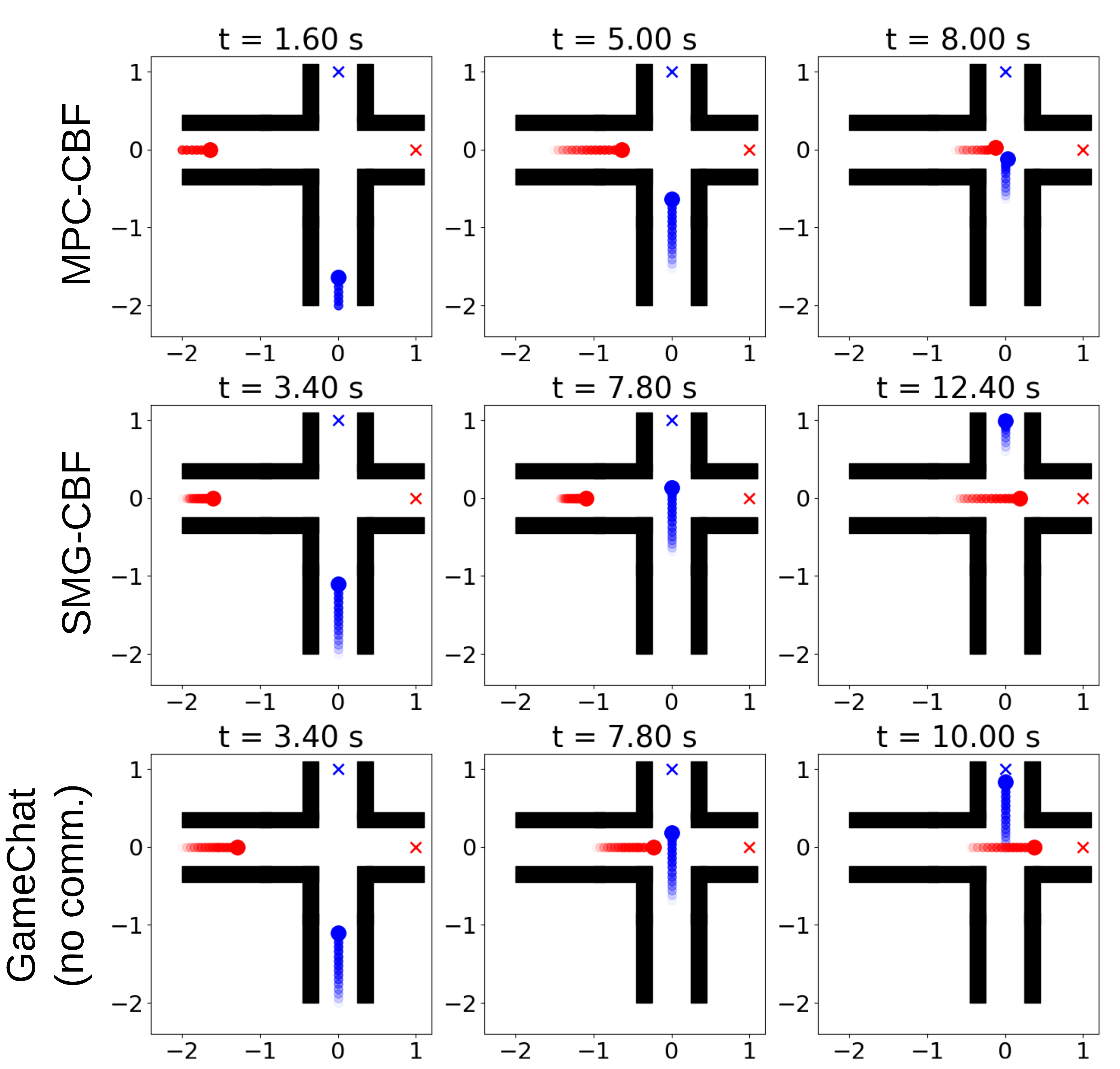}
      \caption{Trajectories generated by noncommunicative methods in symmetric intersection. \textcolor{blue}{\textbf{Blue}} is a \textcolor{blue}{\textbf{grocery}} agent and \textcolor{red}{\textbf{red}} is a \textcolor{red}{\textbf{hospital}} agent. Top row is MPC-CBF, middle is SMG-CBF, bottom is \textsc{GameChat} (no LLM). All methods do not prioritize the hospital agent's urgent task (this happens 50\% of the time). \textsc{GameChat} is much less invasive than SMG-CBF.}
      \label{internocomm}
      \vspace{-10pt}
   \end{figure}






\begin{figure}[t]
      \centering
      
      \includegraphics[scale=.083]{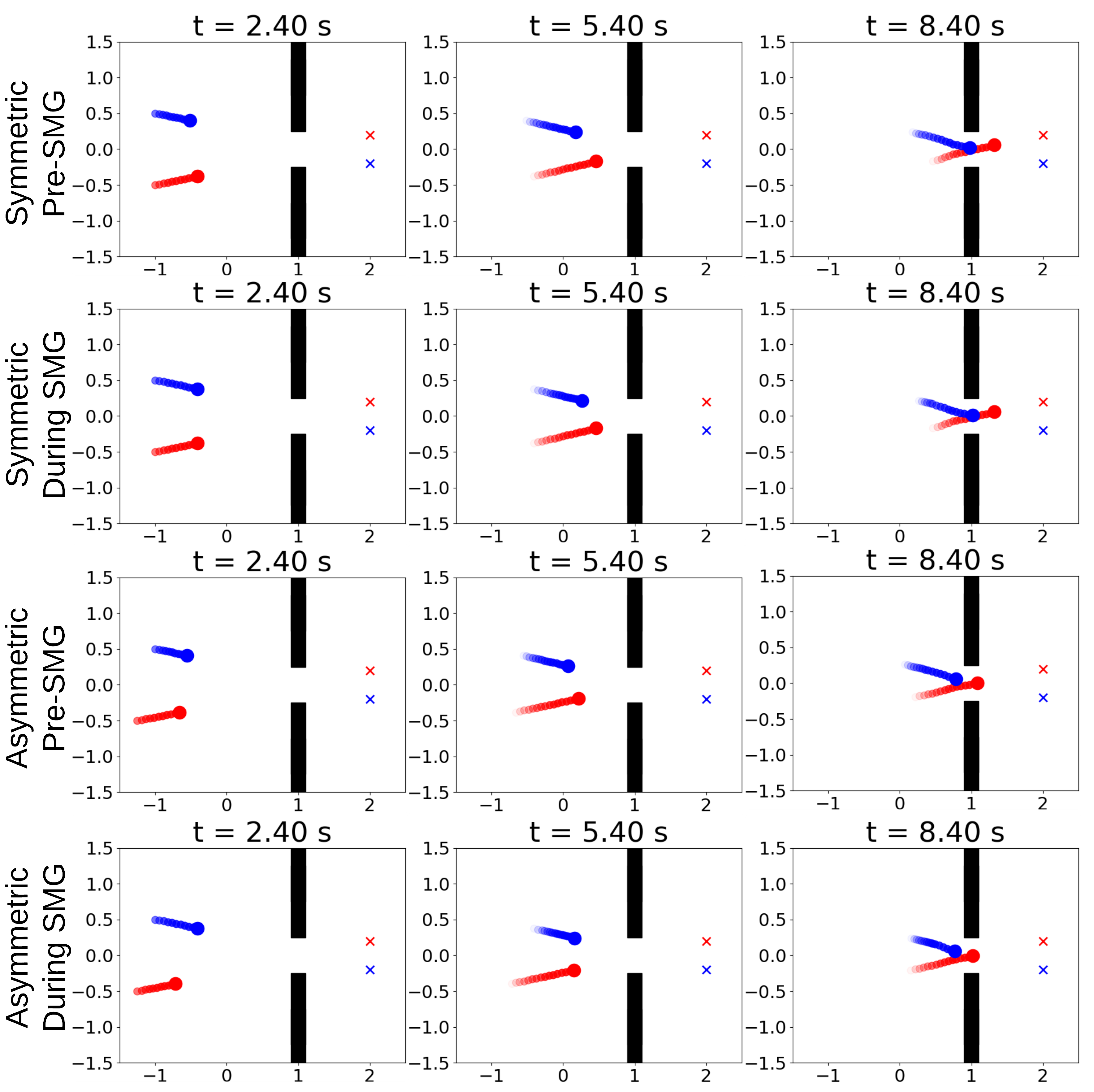}
      \caption{\textsc{GameChat} trajectories in doorway environment. \textcolor{blue}{\textbf{Blue}} is a \textcolor{blue}{\textbf{grocery}} agent and \textcolor{red}{\textbf{red}} is a \textcolor{red}{\textbf{hospital}} agent. Top row is pre-SMG conversation in the symmetric environment, second is during SMG conversation in symmetric, third is pre-SMG in asymmetric, and fourth is during SMG in asymmetric. The conversation finished at $t=2.86s$ for the symmetric (second row) and at $t=2.52s$ for the asymmetric (fourth row).}
      \label{doorcomm}
      \vspace{-15pt}
   \end{figure}

\begin{figure}[thpb]
      \centering
      
      \includegraphics[scale=.085]{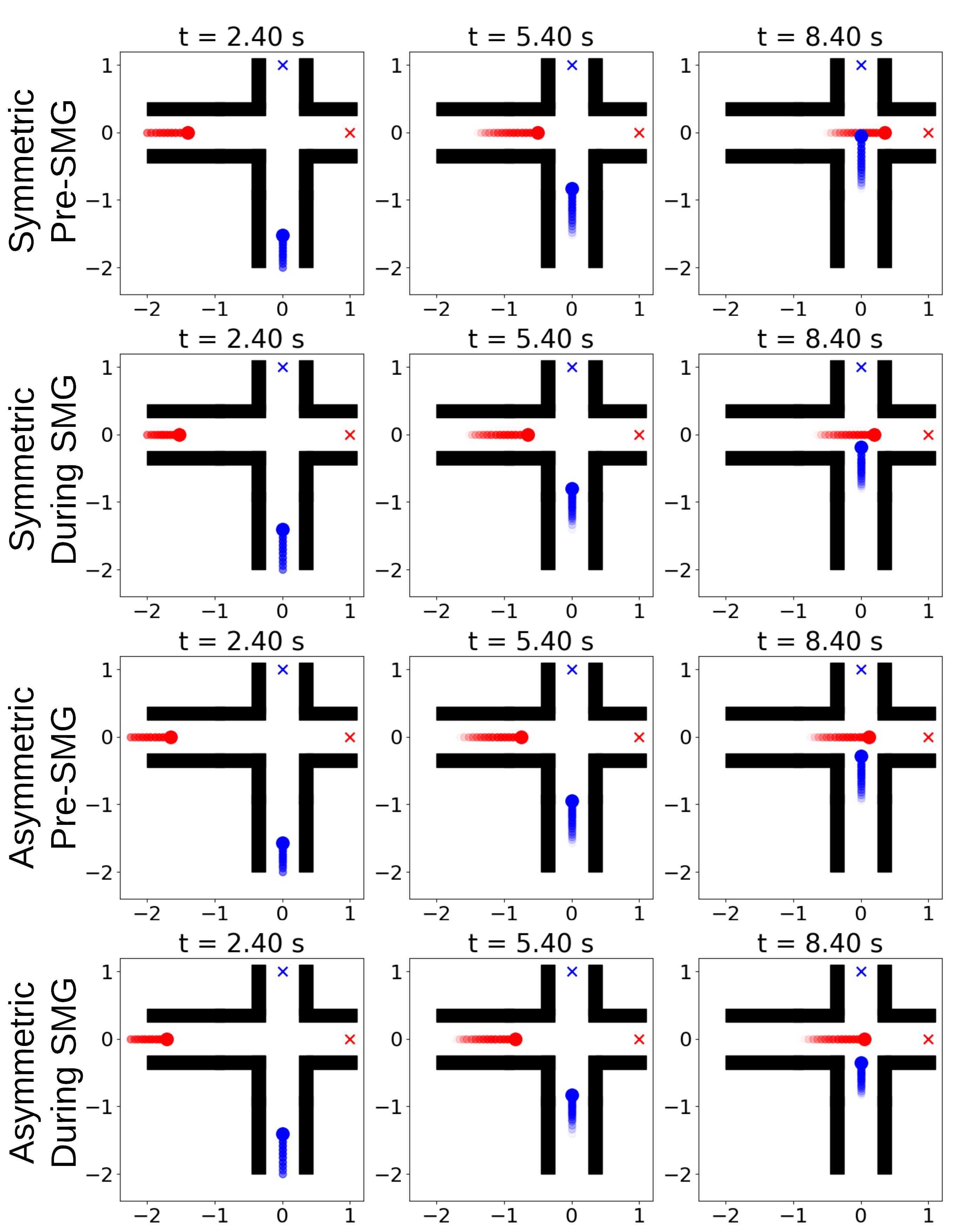}
      \caption{\textsc{GameChat} trajectories in intersection environment. \textcolor{blue}{\textbf{Blue}} is a \textcolor{blue}{\textbf{grocery}} agent and \textcolor{red}{\textbf{red}} is a \textcolor{red}{\textbf{hospital}} agent. Top row is pre-SMG conversation in symmetric environment, second is during SMG conversation in symmetric, third is pre-SMG with asymmetric, and fourth is during SMG with asymmetric. The conversation finished at $t=2.48$ for the symmetric (second row) and at $t=2.61$ for the asymmetric (fourth row).}
      \label{intercomm}
      \vspace{-10pt}
\end{figure}

We observe that \textsc{GameChat} (no LLM) is more agile than SMG-CBF, as it causes less deceleration in the second agent (see Min $v$ in Table~\ref{resulttable}). SMG-CBF's conservative behavior stems from its use of a parameter $\zeta$, requiring the slower agent to maintain $v_{\text{slow}} < \frac{1}{\zeta} v_{\text{fast}}$, regardless of each agent’s distance to $\mathcal{Q}$. This leads to the highest makespan among all methods, barely outperforming the hardcoded baseline in the intersection scenario. However, \textsc{GameChat} (no LLM) achieves the lowest makespan but ignores priority. In the symmetric doorway environment, it takes time to establish leader-follower roles (see Figure~\ref{doornocomm}), which increases makespan and reduces Min $v$, as one agent slows down greatly upon taking the follower role late. Still, our game-theoretic control strategy outperforms existing methods.

\begin{table*}[t]
\centering
\caption{ \textsc{GameChat} for more than two agents in the doorway scenario. }
\label{multiresulttable}
\resizebox{\linewidth}{!}{
\begin{tabular}{rccccccc}
\toprule

Method & $(\downarrow)$ \# Coll. & $(\downarrow)$ \# DLs & $(\uparrow)$ \% CP & $(\downarrow)$ Hi Pri. TTG (s) & $(\downarrow)$ Makespan (s) & 
$(\uparrow)$ Min $v$ (m/s) & $(\downarrow)$ $\Delta$ Path (m) \\
\midrule

\textsc{GameChat} 3-agent & 0 & 0 & 100 & 10.533 $\pm$ 0.394 & 12.533 $\pm$ 0.394 & 0.218 $\pm$ 0.000 & 0.038 $\pm$ 0.002 \\
\textsc{GameChat} 4-agent & 0 & 0 & 100 & 11.000 $\pm$ 0.362 & 13.533 $\pm$ 0.492 & 0.178 $\pm$ 0.000 & 0.075 $\pm$ 0.004 \\
\textsc{GameChat} 5-agent & 0 & 0 & 100 & 11.000 $\pm$ 0.319 & 14.567 $\pm$ 0.660 & 0.140 $\pm$ 0.012 & 0.059 $\pm$ 0.001 \\

\midrule

\end{tabular}
}
\vspace{-10pt}
\end{table*}

\subsubsection{LLM Performance, Prompt Robustness, and Welfare Maximization}
We measured the average duration of the LLM conversation as \textbf{2.767s} with a standard deviation of $0.441s$. This was short enough that in \textsc{GameChat} SMG Comm., the communication finished early enough that the new follower did not have to come to a stop to allow the new leader to pass (see Figure \ref{doorcomm}). For $k>2$, the average of the slowest of the $k$ queries was always \textbf{under one second}.

At first, the LLMs had difficulty with the task of conversing and determining priority (e.g., task forgetting, hallucinating false instructions, forgetting the consensus, etc.). But after carefully modifying the prompts, these problems became less and less frequent, to the point that \textsc{GameChat} performed as well as when agents were already assigned the correct roles (\textsc{GameChat} Ground Truth). The correct consensus was always reached since by default, LLMs are designed to be very honest and cooperative. Since \textsc{GameChat} was always able to reach the correct ordering of priorities in the consensuses, it maximizes social welfare (see Theorem \ref{thm: welfare}). Even in the asymmetric case where the higher priority agent starts farther away from $\mathcal{Q}$, \textsc{GameChat} is capable of having it go first, something the noncommunicative methods are not capable of (compare Figure \ref{doorcomm} and Figure \ref{doornocomm}).

\subsubsection{Pre-SMG vs. During SMG Communication, and Communicative vs. Noncommunicative Methods}

In both Pre-SMG and during SMG communication, \textsc{GameChat} identified the correct ordering of priorities. Since there is a delay in coming to consensus in \textsc{GameChat} SMG Comm., it has slightly higher makespans and higher priority time-to-goals than Pre-SMG Comm. However, these differences are quite small, showing that our LLM communication methodology is quite robust to worse-case scenarios. And interestingly, \textsc{GameChat} SMG Comm. had the lowest path deviation in the doorway scenario and the highest min $v$ in both environments. \textsc{GameChat} (no LLM) has a slightly lower makespan than the communicative methods. However, since it does not account for priorities at all, it (along with the other noncommunicative methods) only has a 50\% correct priority rate (the same as random guessing), while the communicative methods sacrifice a small amount of time in exchange for a doubling in accurately ordering the priorities, which is a worthwhile exchange since it greatly increases social welfare.

\begin{figure}[t]
      \centering
      
      \includegraphics[scale=.28]{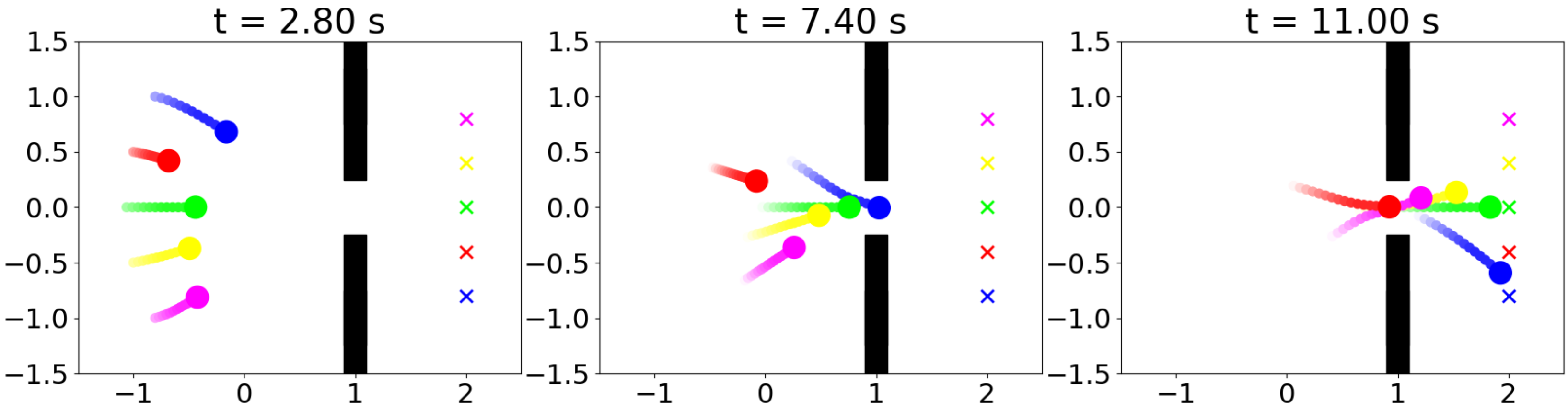}
      \caption{Five-agent \textsc{GameChat} trajectories. Blue and green are hospital agents, yellow and magenta are airport agents, and red is a grocery agent.}
      \label{pentaplot}
      \vspace{-15pt}
   \end{figure}

\vspace{3pt}

\subsubsection{Results for $k>2$ agents:} In Table \ref{multiresulttable}, we observe that \textsc{GameChat} effectively orders the tasks' priorities for 3, 4, and 5 agents. As the number of agents increases, however, it takes slightly longer for the slowest agent to reach its goal. This is because the slowest agent must reduce its velocity further to accommodate the greater number of agents passing by. Figure \ref{pentaplot} displays the trajectories from a run with five agents. The agents get as close to each other as is safe and order themselves correctly by priority.

\section{CONCLUSION, LIMITATIONS, FUTURE WORK}

We presented \textsc{GameChat}, a new approach for safe, agile, and socially optimal control in multi-robot constrained environments with self-interested agents. Agents communicate with each other by querying their LLMs to generate messages. Agents fall back on a game-theoretic strategy if they do not reach consensus. We demonstrate our approach's effectiveness in simulated constrained environments. Considering that we are using APIs and that the priority determination is meant to occur well before any conflict is imminent, \textbf{the latency of the LLM queries is quite low}.

A limitation is that LLMs tend to be honest and helpful, enabling cooperation, but also allowing dishonest agents to falsely claim higher priority. Future work includes addressing agent deception, fine-tuning LLMs (which performed well out-of-the-box), and using local models to reduce latency. While our environment used single-integrator dynamics, similar results apply to double-integrators, where the leader is the first agent unable to stop before $\mathcal{Q}$. Extensions include adapting \textsc{GameChat} to double-integrators or agents with high input costs, exploring more SMG scenarios (e.g., T-junctions, roundabouts), and bringing \textsc{GameChat} onto physical robots to validate real-world performance.

 \clearpage







{\footnotesize\bibliography{refs}
\bibliographystyle{ieeetr}}

\end{document}